\documentclass[11pts]{article}

\DeclareSymbolFont{extraup}{U}{zavm}{m}{n}
\DeclareMathSymbol{\varheartsuit}{\mathalpha}{extraup}{86}
\DeclareMathSymbol{\vardiamondsuit}{\mathalpha}{extraup}{87}

\usepackage{fullpage}
\usepackage[utf8]{inputenc} 
\usepackage{hyperref}       
\usepackage{url}            
\usepackage{booktabs}       
\usepackage{amsfonts}       
\usepackage{nicefrac}       
\usepackage{microtype}      
\usepackage{lipsum}
\usepackage{algorithm}
\usepackage{datetime}
\usepackage{amsthm}
\usepackage{color}
\usepackage{mathrsfs}
\usepackage{amsmath}
\usepackage{amssymb}
\usepackage[sort,nocompress]{cite}

\usepackage[dvipsnames]{xcolor}
\usepackage{pifont}
\usepackage{mathtools}
\newcommand{\eqdef}{\coloneqq}

\newcommand{\norm}[1]{\left\| #1 \right\|}

\newcommand{\cC}{{\cal C}}
\newcommand{\cT}{{\cal T}}

\newcommand{\R}{\mathbb{R}}

\newcommand{\bR}{\mathbb{R}}

\newcommand{\bP}{\mathbb{P}}

\newcommand{\mcG}{\mathscr{G}}
\newcommand{\mcF}{\mathscr{F}}

\newcommand{\mS}{\mathcal{S}}

\DeclareMathOperator{\prox}{prox}

\DeclareMathOperator*{\argmin}{arg\,min}

\newcommand\ec[2][]{\ensuremath{\mathbb{E}_{#1} \left[#2\right]}}
\newcommand\ecn[2][]{\ec[#1]{\norm{#2}^2}}
\newcommand{\sqn}[1]{{\left\lVert#1\right\rVert}^2}
\newcommand\br[1]{\left ( #1 \right )}
\newcommand\pbr[1]{\left \{ #1 \right \} }
\newcommand\ev[1]{\left \langle #1 \right \rangle}


\usepackage{algpseudocode}
\usepackage{mathtools}

\usepackage[colorinlistoftodos,bordercolor=orange,backgroundcolor=orange!20,linecolor=orange,textsize=scriptsize]{todonotes}

\newtheorem{theorem}{Theorem}

\newtheorem{assumption}{Assumption}

\newtheorem{lemma}{Lemma}

\usepackage{hyperref}
\hypersetup{
    colorlinks=true,
    linkcolor=blue,
    filecolor=magenta,
    urlcolor=cyan,
}


\title{\bf Distributed Fixed Point Methods\\  with Compressed Iterates\footnote{This paper was prepared during Summer 2019 when the first two authors were research interns at KAUST. }}

 \author{S\'{e}lim Chraibi$^{\vardiamondsuit \clubsuit}$ \qquad Ahmed Khaled$^{\vardiamondsuit \spadesuit}$ \qquad Dmitry Kovalev$^\vardiamondsuit$ \\ Peter Richt\'{a}rik$^\vardiamondsuit$ \qquad Adil Salim$^\vardiamondsuit$ \qquad Martin Tak\'{a}\v{c}$^\varheartsuit$  \\
 \phantom{xx}
 \\
 $^\vardiamondsuit$ King Abdullah University of Science and Technology \\
 $^\clubsuit$ Universit\'{e} Grenoble Alpes\\
 $^\spadesuit$ Cairo University\\
 $^\varheartsuit$ Lehigh University
 }
 
 \date{December 20, 2019}
 
\begin{document}
\maketitle





\begin{abstract}
We propose basic and natural assumptions under which iterative optimization methods with compressed iterates can be analyzed. This problem is motivated by the practice of federated learning, where a large model stored in the cloud is compressed before it is sent to a mobile device, which then proceeds with training based on local data. We develop standard and variance reduced methods, and establish communication complexity bounds. Our algorithms are the first distributed methods with compressed iterates, and the first fixed point methods with compressed iterates.

\end{abstract}

\section{Introduction}

Communication efficiency and memory issues often arise in machine learning algorithms dealing with large models. This is specifically true in federated learning, where a network of $n$ computing agent is required to jointly solve a machine learning task~\cite{Konecny16, caldas2018expanding}. In this case, dealing with large models implies suffering from a high communication time, which is known to be the bottleneck in distributed applications~\cite{alistarh2017qsgd}.

To overcome this issue, standard techniques proceed by compressing the gradients of some SGD (Stochastic Gradient Descent) like training algorithm as in \cite{alistarh2017qsgd, WenTernGrad17, bernstein2018signsgd}. We consider the case where the iterates themselves need to be compressed. This case is relevant even in the $n=1$ case, where there is only one computing agent provided that the model is too large to keep in memory and needs to be compressed. Training with compressed iterates was only considered in one recent work~\cite{GDCI}, which introduced the gradient descent algorithm with compressed iterates. In this paper we improve the results of~\cite{GDCI} and generalize them in two ways. First, in the case $n=1$, we consider iterates compression in any algorithm that can be formulated as a (stochastic) fixed point iteration. This covers gradient descent and stochastic gradient descent, among others. Second, we consider the distributed case $n \geq 2$, where the network has to jointly find a fixed point of some map, in a distributed manner over the nodes, and using iterate compression. This distributed fixed point problem covers many applications of federated learning, including distributed minimization or distributed saddle point problems.

To address these problems we first study a naive approach that relies on compressing the iterates after each iteration. This iterates compression introduces an extra source of variance in the algorithms. We then propose a variance reduced approach that allows to remove the variance induced by the compression.




In summary, we make the following contributions:
\begin{itemize}
    \item We propose new distributed algorithms (non variance reduced and variance reduced) to learn with compressed iterates in the fixed point framework, which we show captures gradient descent as well as a variety of other methods.
    \item We derive non asymptotic convergence rates for these methods. Our theory allows improved rates when specialized to gradient descent compared to prior work, and captures other variants of gradient descent that have not been previously studied.
    \item We experiment numerically with the developed algorithms on synthetic and real datasets and report our findings.
\end{itemize}


\subsection{Related Work}
\textbf{Communication-Efficiency}. In distributed optimization, the communication cost is the bottleneck. In order to reduce it, many methods have been suggested including the use of intermittent communication and decentralization \cite{Wang18}, as well as exchanging only compressed or quantized information between the computing units~\cite{tsitsiklis1987communication}. Usually, the exchanged information is usually some compressed gradient~\cite{seide20141,alistarh2017qsgd,bernstein2018signsgd} or compressed model update~\cite{Basu2019, reisizadeh2019fedpaq} in a distributed master-worker setting. We note that in the setting of gradient compression, various methods have also been developed to reduce the noise from gradient compression and the method we develop is similar in spirit to some of them, as \cite{DIANA2, mishchenko2019distributed}. As noted before, iterate compression is also used in Federated Learning, see e.g.\ \cite{Konecny16,caldas2018expanding}, where concerns of communication efficiency and memory usage are particularly important.

\textbf{Decentralized Methods}. In decentralized settings, one can distinguish between exact methods and approximate methods. Among inexact methods, some provide algorithms that compute (sub)gradients at compressed iterates: \cite{nedic2008distributed,rabbat2005quantized}. Exact methods usually exchange compressed gradients or compressed iterates \cite{koloskova2019decentralized,doan2018accelerating,reisizadeh2018quantized,berahas2019nested,zhang2019compressed,lee2018finite}. Our focus in this work is on centralized methods, and we leave an extension to decentralized settings to future work.

In addition to concerns of communication efficiency, because compression operators satisfying Assumption~\ref{asm:compression-operator} are not all necessarily quantization operators, our method (in the $n=1$ case) can also be seen as an analysis of fixed point methods with perturbed iterates in a similar spirit to \cite{Devolder2014}, who analyze gradient descent methods given access to an inexact oracle.

The remainder is organized as follows. In the next section we provide some background on distributed fixed point problems and compression operators, and make our assumptions. In Section~\ref{sec:n=1}, we consider the case $n=1$ where there is only one computing unit. We describe our algorithms, state the main results and instantiate the algorithms to practical (stochastic) fixed point iterations. The case $n=1$ is generalized in Section~\ref{sec:n>1} where a network of computing units is considered. We describe our distributed algorithms, state the main results and instantiate the distributed algorithms to practical distributed (stochastic) fixed point iterations. Finally, simulations on a federated learning task is provided in Section~\ref{sec:num}. The proofs of our theorems are postponed to the appendix.

\section{Background}
\subsection{Distributed fixed point}

Let $\cT_1, \cT_2,\dots, \cT_n$ be operators on $\R^d$, i.e., $\cT_i : \bR^d \to \bR^d$. Denoting
\begin{equation}
    \label{eq:finite-sum}
    \cT(x) \eqdef \frac{1}{n} \sum_{i=1}^{n} \cT_i (x),
\end{equation}
our goal is to find a fixed point of $\cT$, i.e., a point $x^\star$ such that
\begin{equation}
\label{eq:fixed_point}
\cT(x^\star) = x^\star .
\end{equation}
Consider a probability space $(\Omega, \mcF, \bP)$, a family $s := (s_i)_{i \in \{1,\ldots,n\}}$ of random variables defined on $(\Omega, \mcF, \bP)$ with values in some measurable space $(\Xi,\mcG)$. Denote $\mS_i$ the distribution of $s_i$ and $\mS$ the distribution (over $\Xi^n$) of $s$. We allow each $\cT_i$ to have the following stochastic representation:
\begin{equation}
    \cT_i (x) = \ec[s_i]{\cT_i (x, s_i)},
\end{equation}
where, with a small abuse of notation, $\cT_i(x,\cdot)$ denotes an $\mS_i$-integrable function for every $x \in \bR^d$. We also denote for every $x \in \bR^d$,
\begin{equation}
    \cT(x, s) \eqdef \frac{1}{n} \sum_{i=1}^{n} \cT_i (x, s_i).
\end{equation}
Note that $\cT(x,\cdot)$ is $\mS$-integrable and that $\ec[s]{\cT(x, s)} = \cT(x)$.

We assume the following contraction property for the stochastic map $\cT(\cdot,s)$.

\begin{assumption}
    \label{asm:T-contraction}
    There exist $x^\star \in \bR^d$, $B \geq 0$ and $\rho \in (0, 1)$ such that for every $x \in \bR^d$,
    \begin{equation}
        \label{eq:T-shrinkage-to-constant}
        \ecn[s]{\cT(x, s) - x^\star} \leq \br{1 - \rho} \sqn{x - x^\star} + B.
    \end{equation}
\end{assumption}
 This assumption is satisfied by many maps $\cT(\cdot,s)$ describing (stochastic) optimization algorithms under some strong convexity / smoothness assumption; see Sections~\ref{sec:n=1} and~\ref{sec:n>1}. We shall also use the expected Lipschitz continuity of $\cT_i(x,s)$ defined as follows.
 \begin{assumption}
    \label{asm:individual-Ti-Lipschitz}
    For every $i \in \{1,\ldots,n\}$, there exists $c_i \geq 0$ such that for every $x,y \in \bR^d$:
    \begin{equation}
        \label{eq:individual-Ti-Lipschitz}
        \ecn[s]{\cT_i (x, s) - \cT_i (y, s)} \leq c_i \sqn{x - y},
    \end{equation}
    and we denote
    \begin{equation*}
        c^2 \eqdef \frac{1}{n} \sum_{i=1}^{n} c_i^2.
    \end{equation*}
\end{assumption}

\subsection{Compression operator}
In order to overcome communication issues, we apply a compression operator to the iterates.

Consider a family $\xi := (\xi_i)_{i \in \{1,\ldots,n\}}$ of random variables defined on $(\Omega, \mcF, \bP)$ with values $(\Xi,\mcG)$. If $n=1$, we shall prefer the notation $\xi$ for $\xi_1$. We consider a measurable map $\cC : \bR^d \times \Xi \to \bR^d$ such that for every $i \in \{1,\ldots,n\}$,
\begin{equation}
    x = \ec[\xi_i]{\cC(x,\xi_i)}.
\end{equation}
The map $\cC$ is called a compression operator. We make the following assumption on $\cC$.
\begin{assumption}
    \label{asm:compression-operator} There exists $\omega\geq 0$ such that
    for every $i \in \{1,\ldots,n\}$ and every $x \in \bR^d$,
    \begin{equation}
        \ecn[\xi_i]{\cC(x; \xi_i) - x} \leq \omega \sqn{x}.
    \end{equation}
\end{assumption}
Assumption~\ref{asm:compression-operator} has been used before, either in this general form or in special cases, in the analysis of gradient methods with {\em compressed gradients}~\cite{koloskova2019decentralized,DIANA2} and {\em compressed iterates}~\cite{GDCI}. Many practical compression operators satisfy this assumption; e.g.,  natural compression and natural dithering,  standard dithering, sparsification, and quantization  \cite{horvath2019, DIANA2, GDCI, Stich18}.


\section{Results in the case $n=1$}
\label{sec:n=1}
In this section, we present two algorithms to solve \eqref{eq:fixed_point} in the case when $n=1$ and state two theorems related to these algorithms.

Consider stochastic fixed point iterations of the form
\begin{equation}
\label{eq:sto-fix-pt}
x^{k+1} = \cT(x^k,s^k),
\end{equation}
where $s^k$ is a sequence of i.i.d.\  copies of $s$. Our first algorithm compresses all iterates $x^k$ for $k\geq 1$.
\begin{algorithm}[H]
    \caption{FPMCI: Fixed Point Method with Compressed Iterates}
    \begin{algorithmic}
        \State {\bf Initialization}: $x^0 \in \R^d$, $(\xi^k)$ i.i.d.\  copies of $\xi$, $(s^k)$ i.i.d.\  copies of $s$
        \For{$k=0,1,2,\ldots$}
        \begin{equation*}
            x^{k+1} = \cC(\cT(x^k,s^k),\xi^k).
        \end{equation*}
        \EndFor
    \end{algorithmic}
    \label{eq:algo-singlenode-nonvr}
\end{algorithm}
Theorem~\ref{thm:singlenode-nonvr} states the convergence result obtained for Algorithm~\ref{eq:algo-singlenode-nonvr}.
\begin{theorem}
    \label{thm:singlenode-nonvr}
    Suppose that Assumptions~\ref{asm:T-contraction}, \ref{asm:individual-Ti-Lipschitz} and \ref{asm:compression-operator} hold. Let $r^k\eqdef \norm{x^{k} - x^\star}^2$. Then the iterates defined by Algorithm~\ref{eq:algo-singlenode-nonvr} satisfy
\[
         \ec{r^k}         \leq \br{ 1 - \rho + 2 \omega c^2 }^k r^0 + \frac{B + 2 \omega \sigma^2}{\rho - 2 \omega c^2},
\]
    where $\sigma^2 \eqdef \ecn[s]{\cT (x^\star, s)}.$
\end{theorem}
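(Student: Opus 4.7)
The plan is to start from the one-step recursion and peel off expectations from the inside out, exploiting (i) the unbiasedness of the compression operator to introduce a clean bias–variance decomposition, and (ii) Assumptions 1, 2, 3 to bound each of the resulting pieces. Let $\mcF^k$ denote the $\sigma$-algebra generated by $x^0,s^0,\xi^0,\dots,s^{k-1},\xi^{k-1}$, so that $x^k$ is $\mcF^k$-measurable, and $s^k,\xi^k$ are independent of $\mcF^k$ and of each other.

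First I would condition on $(\mcF^k,s^k)$ and integrate only over $\xi^k$. Since $\cC(\cdot,\xi^k)$ is unbiased, the cross term vanishes, giving
\[
\ec[\xi^k]{\sqn{x^{k+1}-x^\star}}
= \ec[\xi^k]{\sqn{\cC(\cT(x^k,s^k),\xi^k)-\cT(x^k,s^k)}} + \sqn{\cT(x^k,s^k)-x^\star},
\]
and Assumption~\ref{asm:compression-operator} bounds the first term above by $\omega\sqn{\cT(x^k,s^k)}$. Next I would take expectation over $s^k$ (still conditionally on $\mcF^k$). The term $\ec[s^k]{\sqn{\cT(x^k,s^k)-x^\star}}$ is handled directly by the contraction Assumption~\ref{asm:T-contraction}, yielding $(1-\rho)\sqn{x^k-x^\star}+B$.

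The substantive step is bounding $\ec[s^k]{\sqn{\cT(x^k,s^k)}}$. I would write $\cT(x^k,s^k)=\bigl(\cT(x^k,s^k)-\cT(x^\star,s^k)\bigr)+\cT(x^\star,s^k)$ and apply the elementary Young-type inequality $\sqn{a+b}\le 2\sqn{a}+2\sqn{b}$. The first piece is controlled by Assumption~\ref{asm:individual-Ti-Lipschitz} (here with $n=1$, so $c^2=c_1^2$), giving $2c^2\sqn{x^k-x^\star}$, while the second piece is $2\sigma^2$ by the definition of $\sigma^2$. Collecting the bounds, I obtain the one-step recursion
\[
\ec{r^{k+1}\mid\mcF^k}\le (1-\rho+2\omega c^2)\,\sqn{x^k-x^\star} + B + 2\omega\sigma^2.
\]

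Finally I would take total expectation, unroll the linear recursion $a_{k+1}\le q\,a_k+\beta$ with $q\eqdef 1-\rho+2\omega c^2$ and $\beta\eqdef B+2\omega\sigma^2$, and use (under the implicit assumption $\rho-2\omega c^2>0$, i.e.\ $q<1$) the geometric series bound $\sum_{j=0}^{k-1}q^j\le 1/(1-q)=1/(\rho-2\omega c^2)$ to reach the stated inequality. The only genuinely delicate point is the ordering of expectations — making sure that $s^k$ and $\xi^k$ are integrated after conditioning on $\mcF^k$ but before the other, so that the unbiasedness of $\cC$ can be used cleanly — and the choice of constant in Young's inequality: a factor of $2$ is what produces the coefficient $2\omega c^2$ appearing in the theorem, and any other choice would give a strictly looser rate.
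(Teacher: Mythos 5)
Your proposal is correct and follows essentially the same route as the paper, which proves the general distributed case (Theorem~\ref{thm:distributed-fpci-convergence}) and obtains Theorem~\ref{thm:singlenode-nonvr} as the $n=1$ special case: the same bias--variance decomposition over $\xi^k$ via unbiasedness of $\cC$, the same splitting $\cT(x^k,s^k)=(\cT(x^k,s^k)-\cT(x^\star,s^k))+\cT(x^\star,s^k)$ with the factor-$2$ Young inequality, the same use of Assumptions~\ref{asm:T-contraction}--\ref{asm:compression-operator}, and the same unrolling of the resulting linear recursion. The only quibble is your closing remark that any other constant in Young's inequality ``would give a strictly looser rate'': a weighted version $\sqn{a+b}\le(1+t)\sqn{a}+(1+t^{-1})\sqn{b}$ would merely trade the coefficient on $c^2$ against that on $\sigma^2$, but this does not affect the validity of your argument.
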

The convergence rate of $(x^k)$ is linear up to a ball of squared radius
\begin{equation}
\label{eq:radius}
    \frac{B}{\rho - 2\omega c^2} + \frac{2\omega\sigma^2}{\rho - 2\omega c^2}.
\end{equation}
The first term is coming from Assumption~\ref{asm:T-contraction}. The value of $B$ is usually zero for deterministic fixed point maps $\cT$, see the next subsection. In this case, the first term of~\eqref{eq:radius} is zero. The presence of the second term is mainly a consequence of the variance of the compression operator. If $\omega = 0$ (no compression), then the second term is equal to zero.\footnote{Having $\sigma^2 = 0$ is hopeless, except in very particular cases like $\cT$ deterministic and $x^\star = 0$.}

In order to remove this variance term, we develop a variance reduced version of Algorithm~\ref{eq:algo-singlenode-nonvr} to solve~\eqref{eq:fixed_point}.

\begin{algorithm}[H]
    \caption{VR-FPMCI: Variance Reduced Fixed Point Method with Compressed Iterates}
    \begin{algorithmic}
        \State {\bf Initialization}: $x^0, h^0 \in \R^d$, $(\xi^k)$ i.i.d.\  copies of $\xi$, $(s^k)$ i.i.d.\  copies of $s$
        \For{$k=0,1,2,\ldots$}
        \begin{align*}
            \delta^{k+1} &= \cC(\cT(x^k,s^k) - h^k,\xi^k)\\
            h^{k+1} &= h^k + \alpha \delta^{k+1}\\
            x^{k+1} &= \br{1 - \eta} x^k + \eta \br{h^k + \delta^{k+1}}.
        \end{align*}
        \EndFor
    \end{algorithmic}
    \label{eq:algo-singlenode-vr}
\end{algorithm}

The improved convergence rate of Algorithm~\ref{eq:algo-singlenode-vr} is stated by the next theorem.
\begin{theorem}
    \label{thm:singlenode-vr}
    Let $\Psi^k$ be the following Lyapunov function:
    \[ \Psi^k \eqdef \sqn{x^k - x^\star} + \frac{4 \eta^2 \omega}{\alpha} \ecn[s]{h^k - \cT(x^\star, s^k)}. \]
    Suppose that Assumptions~\ref{asm:T-contraction}, \ref{asm:individual-Ti-Lipschitz} and \ref{asm:compression-operator} hold.
    Then the iterates defined by Algorithm~\ref{eq:algo-singlenode-vr} satisfy
    \begin{equation}
        \label{eq:thm-VR-main-convergence-single}
        \ec{\Psi^{k}} \leq \br{1 - \frac{\min \pbr{\alpha, \eta \rho}}{2} }^k \ec{\Psi^0} + \frac{2\eta B}{\min \pbr{\alpha, \eta \rho}},
    \end{equation}
    if the stepsizes $\alpha,\eta$ satisfy $$\alpha \leq \frac{1}{\omega + 1} \quad\text{and}\quad \eta = \min \pbr{ 1, \frac{\rho}{12 \omega c^2}}.$$
\end{theorem}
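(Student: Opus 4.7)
The plan is to establish a one-step contraction of the form
\[
\ec{\Psi^{k+1}} \leq (1 - r)\ec{\Psi^k} + 2 \eta B, \qquad r := \tfrac{1}{2}\min\{\alpha, \eta\rho\},
\]
from which the theorem follows by induction on $k$ and the geometric series identity $\sum_{j \geq 0}(1-r)^j = 1/r$. I would bound $R^{k+1} \eqdef \sqn{x^{k+1} - x^\star}$ and $H^{k+1} \eqdef \ecn[s]{h^{k+1} - \cT(x^\star, s^k)}$ separately in terms of $R^k$ and $H^k$, then combine them with the specific Lyapunov weight $\mu := 4\eta^2\omega/\alpha$.

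For the $R^{k+1}$-bound, I would condition on everything except $\xi^k$ and apply the bias--variance decomposition. Unbiasedness of $\cC$ gives $\ec[\xi]{\delta^{k+1}} = \cT(x^k, s^k) - h^k$, so the mean term is $\sqn{(1-\eta)(x^k - x^\star) + \eta(\cT(x^k, s^k) - x^\star)}$, which is bounded by $(1-\eta)R^k + \eta\sqn{\cT(x^k, s^k) - x^\star}$ using convexity (since $\eta \leq 1$); Assumption~\ref{asm:compression-operator} controls the variance by $\eta^2 \omega \sqn{\cT(x^k, s^k) - h^k}$. Taking $\ec[s]{\cdot}$, invoking Assumption~\ref{asm:T-contraction} on $\ecn[s]{\cT(x^k, s) - x^\star}$, and splitting $\ecn[s]{\cT(x^k, s) - h^k} \leq 2 c^2 R^k + 2 H^k$ via the triangle inequality and Assumption~\ref{asm:individual-Ti-Lipschitz} would yield
\[
\ec{R^{k+1}} \leq (1 - \eta\rho + 2\eta^2\omega c^2)\, R^k + 2\eta^2\omega\, H^k + \eta B.
\]
The same bias--variance/Jensen/compression recipe applied to $h^{k+1} = h^k + \alpha \delta^{k+1}$ produces a companion recursion of the form
\[
\ec{H^{k+1}} \leq (1 - \alpha + 2\alpha^2\omega)\, H^k + \alpha(1 - \rho + 2\alpha\omega c^2)\, R^k + O(\alpha B),
\]
where the residual absorbs $\ecn[s]{\cT(x^\star, s) - x^\star} \leq B$ (Assumption~\ref{asm:T-contraction} at $x^\star$).

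Combining the two recursions into $\Psi^k = R^k + \mu H^k$, the choice $\mu = 4\eta^2\omega/\alpha$ is dictated by the need to cancel the $H^k$-leakage $2\eta^2\omega H^k$ in the $R$-bound against the decrease $\mu \alpha/2 \cdot H^k = 2\eta^2\omega H^k$ of $\mu H^k$ at rate $\alpha/2$. The stepsize condition $\alpha \leq 1/(\omega+1)$ then lets the $2\alpha^2\omega$ correction be absorbed so $H$ contracts at rate at least $\alpha/2$, while $\eta \leq \rho/(12\omega c^2)$ ensures that both the $R$-coefficient $2\eta^2\omega c^2$ in the $R$-bound and the amplified cross-term $\mu\alpha(1-\rho+2\alpha\omega c^2) = 4\eta^2\omega(1-\rho+2\alpha\omega c^2)$ coming from the weighted $H$-bound consume only a fraction of $\eta\rho$, leaving net contraction of $R$ at rate at least $\eta\rho/2$. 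Assembling these gives $\ec{\Psi^{k+1}} \leq (1 - r)\ec{\Psi^k} + 2\eta B$, and the theorem follows by iteration. The main obstacle is this last bookkeeping: the cross-terms of shape $\eta^2\omega c^2 \cdot (\text{factors})$ appear in both recursions and, after multiplication by the amplification factor $\mu$, become comparable in magnitude to the desired contraction, so verifying that the stated stepsize regime is tight enough to absorb them into a clean rate of $\min\{\alpha, \eta\rho\}/2$ (with the residual remaining linear in $B$) is the delicate part.
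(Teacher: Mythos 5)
Your overall architecture (two coupled one-step recursions for $R^k=\sqn{x^k-x^\star}$ and $H^k=\ecn[s]{h^k-\cT(x^\star,s^k)}$, combined through the Lyapunov weight $4\eta^2\omega/\alpha$) is exactly the paper's, and your bound on $\ec{R^{k+1}}$ coincides with the paper's Lemma~\ref{lemma:VR-distance-recursion} specialized to $n=1$. The genuine gap is in the companion recursion for $H^{k+1}$, and the bookkeeping you defer to the end does not actually close with the recursion you state. Applying the generic bias--variance/Jensen/compression recipe to $h^{k+1}=h^k+\alpha\delta^{k+1}$ gives a contraction factor $1-\alpha+2\alpha^2\omega$ (or worse), and under the stated condition $\alpha\le\frac{1}{\omega+1}$ this need not be below $1-\alpha/2$ --- for $\omega=1$, $\alpha=\tfrac12$ it equals $\tfrac54>1$, so the $H$-component does not contract at all. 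Moreover your $R^k$-coefficient $\alpha(1-\rho+2\alpha\omega c^2)$, once multiplied by $4\eta^2\omega/\alpha$, contributes $4\eta^2\omega(1-\rho)$, which is not controlled by $\eta\le\frac{\rho}{12\omega c^2}$ when $c$ is small (the condition scales with $c^2$, your term does not); and the extra $O(\alpha B)$ residual, amplified by the weight, would spoil the stated constant $\frac{2\eta B}{\min\{\alpha,\eta\rho\}}$ (your one-step residual $2\eta B$ already yields twice that constant).

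The missing idea is the sharper $h$-recursion of the paper's Lemma~\ref{lemma:VR-hk-recursion}: expand $\ecn{h^k-\cT(x^\star,s^k)+\alpha\delta^{k+1}}$ exactly, bound the second moment of the compressor by $\ecn{\delta^{k+1}}\le(\omega+1)\sqn{\cT(x^k,s^k)-h^k}$ so that the hypothesis $\alpha\le\frac{1}{\omega+1}$ turns $\alpha^2(\omega+1)$ into $\alpha$, and then use the polarization identity
\begin{equation*}
2\ev{h^k-\cT(x^\star,s^k),\,\cT(x^k,s^k)-h^k}+\sqn{\cT(x^k,s^k)-h^k}
=-\sqn{h^k-\cT(x^\star,s^k)}+\sqn{\cT(x^k,s^k)-\cT(x^\star,s^k)}.
\end{equation*}
This yields the clean bound $\ec{H^{k+1}}\le(1-\alpha)H^k+\alpha\,\ecn[s]{\cT(x^k,s^k)-\cT(x^\star,s^k)}\le(1-\alpha)H^k+\alpha c^2R^k$, with no $\alpha^2\omega$ slack, no $B$ term, and a $c^2$ (rather than $1-\rho$) coefficient on $R^k$. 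With that lemma in place, your combination step goes through verbatim: the $H^k$-coefficient becomes $\frac{4\eta^2\omega}{\alpha}(1-\tfrac{\alpha}{2})$, the $R^k$-coefficient becomes $1-\eta\rho+6\eta^2\omega c^2\le1-\tfrac{\eta\rho}{2}$ by the choice of $\eta$, and the residual is $\eta B$, giving the theorem's constants.
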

Therefore, Algorithm~\ref{eq:algo-singlenode-vr} converges linearly if $B=0$ and allows for arbitrarily large compression variance.

\subsection{Examples}
We now give some instances of our algorithms~\ref{eq:algo-singlenode-nonvr} and~\ref{eq:algo-singlenode-vr} by particularizing the map $\cT$.

\subsubsection*{Gradient Descent}
Consider an $L$-smooth $\mu$-strongly convex objective function $F: \R^d \to \R$ and a step-size $\gamma \in \left( 0, \frac{1}{L} \right]$. Then
\begin{equation}
    \label{eq:TGD}
    \cT_{\text{GD}} : x \mapsto x - \gamma \nabla F(x)
\end{equation}
satisfies Assumption~\ref{asm:T-contraction} with $\rho = \gamma \mu$ and $B = 0$, and Assumption~\ref{asm:individual-Ti-Lipschitz} with $c = 1$~\cite{bau-com-livre11}. As a result, for any compression operator $\cC$ satisfying Assumption~\ref{asm:compression-operator}, Theorem~\ref{thm:singlenode-nonvr} states that
\[
        \ec{r^k} \leq  \br{1 - \gamma \mu + 2 \omega}^{k} r^0 + \frac{2 \omega}{\gamma \mu - 2 \omega} \sqn{x^\star}.
\]
This result improves upon the result obtained in~\cite{GDCI} by requiring $\omega < \frac{1}{2 \kappa}$ rather than $\omega < \frac{1}{76 \kappa}$ while still guaranteeing convergence. Moreover, using Theorem~\ref{thm:singlenode-vr}, $\ec{r^k}$ converges linearly to zero, rather to a neighbourhood of the solution, if Algorithm~\ref{eq:algo-singlenode-nonvr} is applied.

\subsubsection*{Stochastic Gradient Descent (SGD)}
Consider a $\mu$-strongly convex objective function $F: \R^d \to \R$ and $g(\cdot,s)$ an unbiased estimate of $\nabla F$ ($\forall x\in \R^d, \ec[s]{g(x,s)} = \nabla F(x)$). Assume that there exists $L >0$ such that
$$\ec[s]{\norm{g(x, s) - g(y, s)}} \leq L \norm{x - y}.$$
Then, a simple calculation shows that Assumption~\ref{asm:individual-Ti-Lipschitz} is satisfied by the map
\[ \cT_{\text{SGD}} : (x,s) \mapsto x - \gamma g(x,s).\] It is also known that Assumption~\ref{asm:T-contraction} is satisfied, with $B>0$ in general, see e.g.~\cite{Gower2019}.

\subsubsection*{Proximal SGD}
One can generalize the previous example to the map
\[ \cT_{\text{prox-SGD}} : (x,s) \mapsto \prox_{\gamma H}(x - \gamma g(x,s)),\]
where $H$ is a convex, lower semicontinuous and proper function $\bR^d \to (-\infty,+\infty]$, and $\prox_{\gamma H}$ is the proximity operator of $\gamma H$ defined as
$$
\prox_{\gamma H}(x) \eqdef \argmin_{y \in \bR^d}  \left\{\frac12\|x-y\|^2 + \gamma H(y)\right\}.
$$
The map $\cT_{\text{prox-SGD}}$ also satisfies the Assumptions~\cite{atc-for-mou-14,atc-for-mou-17}.
A fixed point of $\cT_{\text{prox-SGD}}$ is a minimizer of $F+H$.
\subsubsection*{Davis-Yin splitting}
Davis-Yin splitting~\cite{davis2017three} is an optimization algorithm to minimize a sum of three convex functions $F+G+H$. It is a generalization of Gradient Descent, Proximal Gradient Descent, and Douglas Rachford algorithms~\cite{boyd2011distributed,bau-com-livre11} and it takes the form of fixed point iterations $x^{k+1} = \cT_{\text{DY}}(x^k)$. The map $\cT_{\text{DY}}$ satisfies Assumptions~\ref{asm:T-contraction} and~\ref{asm:individual-Ti-Lipschitz} with $B=0$ if at least one of $F,G$ or $H$ is  strongly convex and at least one of $G$ or $H$ is smooth~\cite{davis2017three}. Therefore Algorithm~\ref{eq:algo-singlenode-vr} converges linearly in this case.

\subsubsection*{Vu-Condat splitting}
Vu condat splitting~\cite{con-jota13,vu2013splitting} is an optimization algorithm to minimize a sum of three convex functions $F(x)+G(x)+H(Ax)$ where $A$ is a matrix. It is a generalization of Gradient Descent, Proximal Gradient Descent, and Douglas Rachford, ADMM and Chambolle-Pock algorithms~\cite{chambolle2011first} and it takes the form of fixed point iterations $x^{k+1} = \cT_{\text{VC}}(x^k)$. The map $\cT_{\text{VC}}$ satisfies Assumptions~\ref{asm:T-contraction} and~\ref{asm:individual-Ti-Lipschitz} with $B=0$ if $G$ is strongly convex and $H$ is smooth.
Therefore Algorithm~\ref{eq:algo-singlenode-vr} converges linearly in this case.

\subsubsection*{(Stochastic) Gradient Descent Ascent}

Consider a $\mu$-strongly convex-concave function $F: \R^d \times \R^d \to \R$ defined by $F : (x,y) \mapsto F(x,y)$, (strongly convex in $x$ and strongly concave in $y$) with $L$-Lipschitz continuous gradient.
Then, the map
\begin{equation}
    \label{eq:TGDA}
    \begin{split}
        \cT_{\text{GDA}}: &(x,y) \mapsto \\
        &(x,y)^T - \gamma (\nabla_x F(x,y),-\nabla_y F(x,y))^T,
    \end{split}
\end{equation}
satisfies Assumption~\ref{asm:individual-Ti-Lipschitz} and Assumption~\ref{asm:T-contraction} with $B=0$ if $\gamma$ is small enough. In this case, Algorithm~\ref{alg:VR-distr-fixed-point-compressed-iterates} will converge linearly to a saddle point $x^\star$ of $F$. This example can be generalized to the case where the gradient $(\nabla_x F(x,y),-\nabla_y F(x,y))^T$ is replaced by an unbiased estimate with the expected Lipschitz continuity property, in which case Assumption~\ref{asm:T-contraction} holds with $B \geq 0$ in general.

\section{The case $n > 1$}
\label{sec:n>1}
We now consider the case where $n$ computing agents are required to compute a fixed point of $\cT$, under the restriction that each node $i$ only have access to the "local" random map $\cT_i(\cdot,\xi_i)$. We solve this problem in a distributed master/slave setting, where each iteration is divided into a computation step and a communication step. During the computation step, every node $i$ uses $\cT_i(\cdot,\xi_i)$ to update some "local" variable. Then, during the communication step, each node sends its local variable to the master node of the network that aggregates the variables and sends back the result to the other nodes.
We extend Algorithm~\ref{eq:algo-singlenode-nonvr} (resp.\ Algorithm~\ref{eq:algo-singlenode-vr}) to this setting, as well as Theorem~\ref{thm:singlenode-nonvr} (resp.\ Theorem~\ref{thm:singlenode-vr}). The distributed (non variance reduced) fixed point algorithm is summarized in Table~\ref{alg:distr-fixed-point-compressed-iterates}.

\begin{algorithm}[H]
    \caption{Distributed Fixed Point Method with Compressed Iterates}
    \begin{algorithmic}
        \State {\bf Initialization}: $x^0 \in \R^d$, $(\xi^k)$ i.i.d.\  copies of $\xi$, $(s^k)$ i.i.d.\  copies of $s$
        \For{$k=0,1,2,\ldots$}
        \State Broadcast $x^k$ to all nodes.
        \For{$i = 1, \ldots, n$ in parallel}
        \State Communicate to master node
        \[ \delta_i^{k+1} = \cC \br{\cT_i (x^k, s_i^k); \xi_i^k} \]
        \EndFor
        \State Compute
        \begin{equation}
            \label{eq:dfmci-update}
            x^{k+1} = \frac{1}{n} \sum_{i=1}^{n} \delta_i^{k+1}.
        \end{equation}
        \EndFor
    \end{algorithmic}
    \label{alg:distr-fixed-point-compressed-iterates}
\end{algorithm}
The convergence rate of this method is a direct generalization of Theorem~\ref{thm:singlenode-nonvr}
\begin{theorem}
    \label{thm:distributed-fpci-convergence}
    Let Assumptions~\ref{asm:T-contraction},~\ref{asm:individual-Ti-Lipschitz} and~\ref{asm:compression-operator} hold.     
    Assume moreover that $s_1,\ldots,s_n$ are independent. Let $r^k\eqdef \norm{x^{k} - x^\star}^2$.
    Then the iterates defined by Algorithm~\ref{alg:distr-fixed-point-compressed-iterates} satisfy
\[ \ec{r^k}
        \leq \br{ 1 - \rho + \frac{2 \omega c^2}{n} }^k r^0
        + \frac{B + \frac{2 \omega}{n} \sigma^2}{\rho - \frac{2 \omega c^2}{n}},
\]
    where $\sigma^2 \eqdef \frac{1}{n} \sum_{i=1}^{n} \ecn[s_i]{\cT_i (x^\star, s_i)}.$
\end{theorem}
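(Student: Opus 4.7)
The plan is to mirror the structure of Theorem~\ref{thm:singlenode-nonvr}, the key new ingredient being an averaging step that exploits the across-node independence of the compression randomness to turn an $\omega$-sized compression variance into a $\omega/n$-sized one.

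First I would decompose the update by adding and subtracting the uncompressed local maps:
\[
x^{k+1} - x^\star \;=\; \underbrace{\frac{1}{n}\sum_{i=1}^{n}\bigl[\cC(\cT_i(x^k,s_i^k);\xi_i^k)-\cT_i(x^k,s_i^k)\bigr]}_{=:\,E^k_{\mathrm{comp}}} \;+\; \bigl[\cT(x^k,s^k)-x^\star\bigr].
\]
Conditioning on $(x^k,s^k)$ and taking expectation over $\xi^k=(\xi_1^k,\dots,\xi_n^k)$, the unbiasedness $\ec[\xi_i]{\cC(y;\xi_i)}=y$ kills the cross term, so
\[
\ec[\xi^k]{\sqn{x^{k+1}-x^\star}\mid x^k,s^k} \;=\; \sqn{\cT(x^k,s^k)-x^\star} \;+\; \ec[\xi^k]{\sqn{E^k_{\mathrm{comp}}}\mid x^k,s^k}.
\]

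Next, using the independence of $\xi_1^k,\dots,\xi_n^k$ together with the unbiasedness, the mean-zero summands in $E^k_{\mathrm{comp}}$ are uncorrelated, so variance-of-the-mean equals mean-of-the-variances divided by $n^2$. Combined with Assumption~\ref{asm:compression-operator} this yields
\[
\ec[\xi^k]{\sqn{E^k_{\mathrm{comp}}}\mid x^k,s^k} \;\leq\; \frac{\omega}{n^2}\sum_{i=1}^n \sqn{\cT_i(x^k,s_i^k)}.
\]
I would then bound each summand via the triangle inequality, $\sqn{\cT_i(x^k,s_i^k)}\leq 2\sqn{\cT_i(x^k,s_i^k)-\cT_i(x^\star,s_i^k)}+2\sqn{\cT_i(x^\star,s_i^k)}$, take expectation over $s_i^k$ (using independence across $i$ to handle the factors separately), and apply Assumption~\ref{asm:individual-Ti-Lipschitz} to the first piece. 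Summing over $i$ and using $\sum_i c_i^2 = nc^2$ and the definition of $\sigma^2$ gives
\[
\ec[s,\xi]{\sqn{E^k_{\mathrm{comp}}}\mid x^k} \;\leq\; \frac{2\omega c^2}{n}\,r^k \;+\; \frac{2\omega\sigma^2}{n}.
\]

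For the remaining deterministic term, Assumption~\ref{asm:T-contraction} applied to the full stochastic map directly yields $\ec[s^k]{\sqn{\cT(x^k,s^k)-x^\star}\mid x^k}\leq (1-\rho)r^k+B$. Combining the two pieces produces the one-step recursion
\[
\ec{r^{k+1}\mid x^k} \;\leq\; \Bigl(1-\rho+\tfrac{2\omega c^2}{n}\Bigr) r^k \;+\; B + \tfrac{2\omega\sigma^2}{n},
\]
after which taking total expectation and unrolling the geometric recurrence (with ratio $q = 1-\rho+2\omega c^2/n$ and additive constant $C = B + 2\omega\sigma^2/n$, so that $C/(1-q)$ matches the stated ball radius) finishes the proof.

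The only nontrivial step is the second one: it is essential to condition on $(x^k,s^k)$ before taking the $\xi^k$-expectation so that the $\cT_i(x^k,s_i^k)$'s can be treated as constants, and to use both unbiasedness and mutual independence of $\xi_1^k,\dots,\xi_n^k$ to obtain the $1/n^2$ factor. Everything afterwards is bookkeeping that parallels the single-node proof of Theorem~\ref{thm:singlenode-nonvr}, with the independence of $s_1,\dots,s_n$ used only to distribute the $s$-expectation across the per-node terms $\sqn{\cT_i(x^\star,s_i^k)}$ that define $\sigma^2$.
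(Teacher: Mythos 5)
Your proposal is correct and follows essentially the same route as the paper's proof: the bias--variance decomposition conditional on $(x^k,s^k)$, the $1/n^2$ variance reduction from independence of the per-node compression noises, the triangle inequality plus Assumption~\ref{asm:individual-Ti-Lipschitz} on $\sqn{\cT_i(x^k,s_i^k)}$, Assumption~\ref{asm:T-contraction} on the uncompressed term, and unrolling the resulting recursion via Lemma~\ref{lem:recursion}. No substantive differences.
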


Once again, the rate suffers from the variance term $\frac{\frac{2 \omega}{n} \sigma^2}{\rho - \frac{2 \omega c^2}{n}}$ which is removed by our variance reduced approach summarized in Table~\ref{alg:VR-distr-fixed-point-compressed-iterates}.


\begin{algorithm}[H]
    \caption{Distributed Variance-Reduced Fixed Point Method with Compressed Iterates}
    \begin{algorithmic}
        \State {\bf Initialization}: $x^0, h_1^0, h_2^0, \ldots, h_n^0 \in \R^d$, stepsize $\eta \in (0, 1]$, stepsize $\alpha > 0$
        \For{$k=0,1,2,\ldots$}
        \State Broadcast $x^k$ to all nodes.
        \For{$i = 1, \ldots, n$ in parallel}
         \begin{align*}
                \delta_i^{k+1} &= \cC (\cT_i (x^k, s_i^k) - h^k_i; \xi_i^k)\\
                h^{k+1}_{i} &= h^k_i + \alpha \delta^{k+1}_i\\
                \Delta^{k+1}_i &= \delta^{k+1}_i + h^k_i
                \end{align*}
        \State Communicate $\Delta^{k+1}_i$ to master node
            \EndFor
            \State Compute
            \begin{equation*}
                x^{k+1} = \br{1 - \eta} x^k + \eta \frac{1}{n} \sum_{i=1}^{n} \Delta^{k+1}_i.
            \end{equation*}
        \EndFor
    \end{algorithmic}
    \label{alg:VR-distr-fixed-point-compressed-iterates}
\end{algorithm}

Finally, the next theorem is the analogue of Theorem~\ref{eq:algo-singlenode-vr} in the distributed setting.
\begin{theorem}
    \label{theorem:VR-distributed-fpci}
    Define the Lyapunov function
    \begin{align*}
        \Psi^k \eqdef \sqn{x^k - x^\star} + \frac{4 \eta^2 \omega}{\alpha n^2} \sum_{i=1}^{n} \ecn[s_i]{h^k_i - \cT_i (x^\star, s^k_i)}.
    \end{align*}
    Suppose that Assumptions~\ref{asm:T-contraction}, \ref{asm:individual-Ti-Lipschitz} and \ref{asm:compression-operator} hold. Assume moreover that $s_1,\ldots,s_n$ are independent. Then the iterates defined by Algorithm~\ref{alg:VR-distr-fixed-point-compressed-iterates} satisfy
    \begin{equation}
        \label{eq:thm-VR-main-convergence}
        \ec{\Psi^{k}} \leq \br{1 - \frac{\min \pbr{\alpha, \eta \rho}}{2} }^k \ec{\Psi^0} + \frac{2\eta B}{\min \pbr{\alpha, \eta \rho}},
    \end{equation}
    if the stepsizes $\alpha,\eta$ satisfy $$\alpha \leq \frac{1}{\omega + 1} \quad\text{and}\quad \eta = \min \pbr{ \frac{\rho n}{12 \omega c^2}, 1 }.$$
\end{theorem}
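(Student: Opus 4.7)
The plan is to establish a one-step contraction of the form
\[ \ec{\Psi^{k+1} \mid \mcF^k} \leq \br{1 - \tfrac{1}{2}\min\pbr{\alpha,\eta\rho}} \Psi^k + \eta B, \]
and then iterate it and sum the resulting geometric series. Throughout, $\mcF^k$ denotes the sigma-algebra generated by all randomness through the end of iteration $k-1$, so $x^k$ and $h_1^k,\ldots,h_n^k$ are measurable while $s^k,\xi^k$ are fresh and independent.

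For the primary block $\sqn{x^{k+1}-x^\star}$, I would write
\[ x^{k+1}-x^\star = (1-\eta)(x^k-x^\star)+\eta\br{\tfrac{1}{n}\textstyle\sum_{i=1}^n\Delta_i^{k+1}-x^\star}, \]
and use unbiasedness of $\cC$ together with independence of $\xi_1^k,\ldots,\xi_n^k$ to split $\ec{\sqn{x^{k+1}-x^\star}\mid\mcF^k}$ into a ``bias'' piece, which Jensen with weights $(1-\eta,\eta)$ (valid since $\eta\leq 1$) followed by Assumption~\ref{asm:T-contraction} handles to give $(1-\eta\rho)\sqn{x^k-x^\star}+\eta B$, and a compression-variance piece bounded by $\tfrac{\eta^2\omega}{n^2}\sum_i\sqn{\cT_i(x^k,s_i^k)-h_i^k}$ via Assumption~\ref{asm:compression-operator}. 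The triangle split $\sqn{\cT_i(x^k,s_i^k)-h_i^k}\leq 2\sqn{\cT_i(x^k,s_i^k)-\cT_i(x^\star,s_i^k)}+2\sqn{\cT_i(x^\star,s_i^k)-h_i^k}$ combined with Assumption~\ref{asm:individual-Ti-Lipschitz} then converts this into $\tfrac{2\eta^2\omega c^2}{n}\sqn{x^k-x^\star}+\tfrac{2\eta^2\omega}{n^2}\sum_i V_i^k$, where $V_i^k\eqdef\ecn[s_i]{h_i^k-\cT_i(x^\star,s_i^k)}$.

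For the auxiliary block $V_i^k$, the decisive step is to expand the square directly, not via convexity:
\[ \sqn{h_i^{k+1}-\cT_i(x^\star,s_i)} = \sqn{h_i^k-\cT_i(x^\star,s_i)}+2\alpha\ev{\delta_i^{k+1},h_i^k-\cT_i(x^\star,s_i)}+\alpha^2\sqn{\delta_i^{k+1}}. \]
Taking $\ec[\xi_i^k]{\cdot}$ and using $\ec[\xi_i^k]{\delta_i^{k+1}}=\cT_i(x^k,s_i^k)-h_i^k$ together with $\ec[\xi_i^k]{\sqn{\delta_i^{k+1}}}\leq(1+\omega)\sqn{\cT_i(x^k,s_i^k)-h_i^k}$, and then rewriting the inner product through $2\ev{a,b}=\sqn{a+b}-\sqn{a}-\sqn{b}$, produces a term $-\alpha\sqn{\cT_i(x^k,s_i^k)-h_i^k}$. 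The stepsize condition $\alpha(1+\omega)\leq 1$ is exactly what makes the $\alpha^2(1+\omega)$-coefficient from the compression variance absorb into this negative term. What survives, after also averaging over $s_i^k$ and an independent fresh $s_i$, triangle-splitting through $\cT_i(x^\star,s_i^k)$, and invoking Assumption~\ref{asm:individual-Ti-Lipschitz} together with the aggregate bound $\sum_i\mathrm{Var}(\cT_i(x^\star,s_i))\leq n^2 B$ (obtained from Assumption~\ref{asm:T-contraction} at $x^\star$ and the independence of $s_1,\ldots,s_n$), is $\ec{V_i^{k+1}\mid\mcF^k}\leq(1-\alpha)V_i^k+2\alpha c_i^2\sqn{x^k-x^\star}+O(\alpha B)$.

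Combining both estimates with the prescribed weight $\tfrac{4\eta^2\omega}{\alpha n^2}$ on $\sum_i V_i^k$ yields $\ec{\Psi^{k+1}\mid\mcF^k}\leq A\sqn{x^k-x^\star}+C\cdot\tfrac{4\eta^2\omega}{\alpha n^2}\sum_i V_i^k+\eta B$. A short calculation gives $C=1-\alpha/2$ exactly, so the $V$-contraction is automatic whenever $\tau\leq \alpha/2$; and the constraint $\eta\leq\rho n/(12\omega c^2)$ is the quantitative condition that forces $A\leq 1-\eta\rho/2$. Together these deliver the contraction factor $1-\tfrac{1}{2}\min\pbr{\alpha,\eta\rho}$, and iterating plus taking total expectation yields the stated bound. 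The main obstacle is precisely the $V_i^k$ recursion: a naive convexity argument $h_i^{k+1}=(1-\alpha)h_i^k+\alpha(h_i^k+\delta_i^{k+1})$ would inflate the $V_i^k$ coefficient by an uncontrollable factor of $\omega$; the expand-and-cancel trick above, enabled by $\alpha(1+\omega)\leq 1$, is what removes this obstruction and is the essential departure from the analysis of Theorem~\ref{thm:distributed-fpci-convergence}.
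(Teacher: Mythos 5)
Your outline reproduces the paper's argument almost step for step: the same Lyapunov function, the same two one-step recursions (one for $\sqn{x^{k+1}-x^\star}$ via the variance decomposition, independence of the $\xi_i^k$, Jensen with weights $(1-\eta,\eta)$ and Assumption~\ref{asm:T-contraction}; one for the $h_i$-block via the expand-and-cancel identity unlocked by $\alpha(\omega+1)\leq 1$), and the same weighted combination producing the factor $1-\alpha/2$ on the auxiliary block. These are exactly Lemmas~\ref{lemma:VR-distance-recursion} and~\ref{lemma:VR-hk-recursion} in the appendix, so the route is not genuinely different.

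There is, however, one point where your bookkeeping does not close. In the $h$-recursion you evaluate the Lyapunov term at a fresh sample and repair the resulting mismatch by triangle-splitting through $\cT_i(x^\star,s_i^k)$, which gives the coefficient $2\alpha c_i^2$ on $\sqn{x^k-x^\star}$ plus an $O(\alpha B)$ remainder. Weighted by $\tfrac{4\eta^2\omega}{\alpha n^2}$ and added to the $\tfrac{2\eta^2\omega c^2}{n}$ coming from the primary block, the total coefficient on $\sqn{x^k-x^\star}$ becomes $1-\eta\rho+\tfrac{10\eta^2\omega c^2}{n}$, and the stated stepsize $\eta\leq\tfrac{\rho n}{12\omega c^2}$ then yields only $1-\tfrac{\eta\rho}{6}$, not the claimed $1-\tfrac{\eta\rho}{2}$ (you would need $\eta\leq\tfrac{\rho n}{20\omega c^2}$). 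Likewise your $O(\alpha B)$ remainder, after the $\tfrac{4\eta^2\omega}{\alpha n^2}$ weighting, adds an extra $O(\eta^2\omega B)$ to the per-step drift, so the limiting neighbourhood is not $\tfrac{2\eta B}{\min\pbr{\alpha,\eta\rho}}$ but carries an additional $\eta\omega$-dependent factor. The paper's Lemma~\ref{lemma:VR-hk-recursion} avoids both costs by keeping the \emph{same} sample $s_i^k$ on both sides of the recursion, so the surviving cross term is exactly $\alpha\ecn{\cT_i(x^k,s_i^k)-\cT_i(x^\star,s_i^k)}\leq\alpha c_i^2\sqn{x^k-x^\star}$ with no factor of $2$ and no variance term; the resulting $\tfrac{6\eta^2\omega c^2}{n}\leq\tfrac{\eta\rho}{2}$ is precisely what the constant $12$ in the stepsize buys. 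Your treatment is arguably more careful than the paper's about which sample appears in $\Psi^{k+1}$, but as written it proves a statement with strictly worse constants than the theorem; to recover the stated bound you must either adopt the paper's convention for the $h$-term or tighten the stepsize and enlarge the neighbourhood accordingly.
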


Algorithm~\ref{alg:VR-distr-fixed-point-compressed-iterates} converges linearly if $B=0$. We further note that in the special case $\alpha \simeq 1$ the algorithm reduces to quantizing the \emph{model update} in expectation, a practice that is already common in practice. This further mirrors the result of \cite{DIANA2} where quantizing gradient differences rather than gradients allows several benefits over quantizing gradients. The message of Theorem~\ref{theorem:VR-distributed-fpci}, therefore, is that quantizing iterate differences rather than iterates also leads to better convergence properties.

\subsection{Examples}


\subsubsection*{Distributed (Stochastic) Gradient Descent}

Consider a $\mu$-strongly convex objective function $F: \R^d \to \R$ expressed as an empirical mean \[F(x) = \frac{1}{n}\sum_{i=1}^n f_i(x),\] where each $f_i$ is $L_i$-smooth and convex. Then it is easy to check that the map $\cT_{GD}$ defined in~\eqref{eq:TGD} takes the form~\eqref{eq:finite-sum} and that Assumptions~\ref{asm:T-contraction} and~\ref{asm:individual-Ti-Lipschitz} are satisfied by this map if $\gamma$ is small enough, see e.g.~\cite{Gower2019}. Algorithm~\ref{alg:VR-distr-fixed-point-compressed-iterates} is then a distributed gradient descent algorithm with iterates compression that converges linearly. If the $f_i$ are themselves written as expectations and have the expected Lipschitz continuity property and convexity, one can check that Assumptions~\ref{asm:T-contraction} and~\ref{asm:individual-Ti-Lipschitz} are also satisfied.

\subsubsection*{Distributed (Stochastic) Gradient Descent Ascent}

The example~\eqref{eq:TGDA} can be extended to the case where $F$ is expressed as an empirical mean if each term has a Lipschitz continuous gradient. In this case, the distributed Algorithm~\ref{alg:VR-distr-fixed-point-compressed-iterates} still converges linearly to a saddle point $x^\star$ of $F$.

\section{Empirical results}
\label{sec:num}

Here we present very preliminary numerical results. 

\paragraph{Experiment.} We minimize an $l_2$ regularized loss of a linear regression problem using gradient descent and natural compression \cite{horvath2019}. We carry out this experiment for different condition numbers.

\paragraph{Parametrisation.} We use an artificial regression dataset which allows us to have control over the conditioning of the loss function (by controling the singular values of the feature matrix). We use the stepsize $\gamma = \frac1L$ and in the variance reduced iteration we chose $\alpha = \frac{1}{1+\omega}$ and $\eta = \frac{\rho}{12\omega}$ (in the case of natural compression $\omega = \frac18$).

\paragraph{Results.} In the following we plot the evolution $ \| x^k - x^\star\|^2 $ for GD (Gradient Descent), GDCI (Gradient Descent with Compressed Iterates), VR-GDCI (Variance Reduced Gradient Descent with Compressed Iterates).

\begin{figure}[H]
    \centering
    \includegraphics[width=0.6\linewidth]{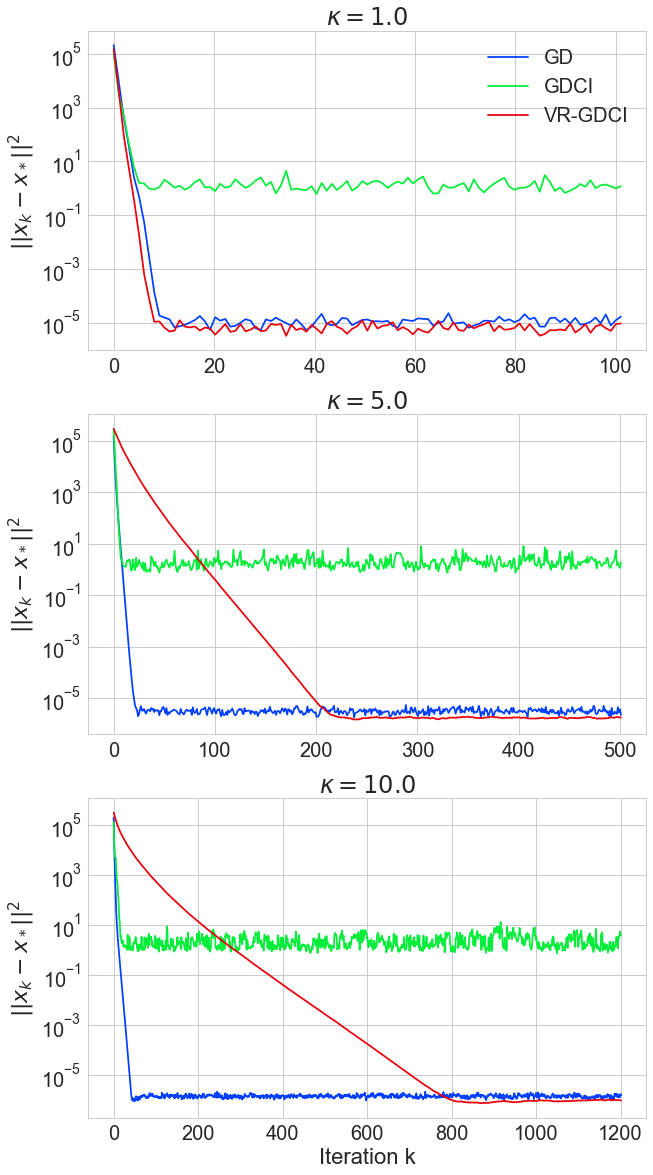}
    \label{fig:experiments}
\end{figure}


\bibliographystyle{plain}
\bibliography{bib}

\newpage
\appendix
\part*{Appendix}


\section{Basic Facts}

We recall the following fact about the variance of a random variable: Given a fixed $Y \in \R^d$ and a random variable $X \in \R^d$, we have
\begin{equation}
    \label{eq:basic-fact-variance-decomp}
    \ecn{X - Y} = \ecn{X - \ec{X}} + \sqn{\ec{X} - Y}.
\end{equation}
If $X_1, X_2, \ldots, X_n$ are independent random variables then
\begin{equation}
    \label{eq:variance-of-independent-sum}
    \ecn{\sum_{i=1}^{n} X_i - \ec{X_i}} = \sum_{i=1}^{n} \ecn{X_i - \ec{X_i}}.
\end{equation}
We also recall the following inequality from linear algebra: for any $a, b \in \R^d$ we have,
\begin{equation}
    \label{eq:sqnorm-triangle-inequality}
    \sqn{a + b} \leq 2 \sqn{a} + 2 \sqn{b}.
\end{equation}
We will also use the following fact: which follows from the convexity of the squared Euclidean norm: for $\eta \in [0, 1]$ we have,
\begin{equation}
    \label{eq:jensen-sqnorm}
    \sqn{\eta a + \br{1 - \eta} b} \leq \eta \sqn{a} + \br{1 - \eta} \sqn{b}.
\end{equation}

Moreover, we shall use the following lemma without mention.

\begin{lemma} \label{lem:recursion} Let $0<A<1$ and $B>0$  and let $\{r_k\}_{k\geq 0}$ be a sequence of real numbers with $r_0>0$ satisfying the recursion
\[ r_{k+1} \leq A r_{k} + B.\]
  Then
\[ r_k \leq A^k r_0 + \frac{B}{1-A}.\]
\end{lemma}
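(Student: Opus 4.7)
The statement is an elementary recursion-unrolling lemma, so my plan is to prove it by a short induction on $k$ (alternatively, by direct unrolling and summing the geometric series). I will present the induction approach since it keeps the bookkeeping minimal.

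The plan is as follows. First I would verify the base case $k=0$: the claimed bound reads $r_0 \leq r_0 + B/(1-A)$, which holds since $B>0$ and $1-A>0$. Then I would carry out the inductive step: assuming $r_k \leq A^k r_0 + B/(1-A)$, I substitute this upper bound into the hypothesized recursion $r_{k+1} \leq A r_k + B$ to obtain
\begin{equation*}
r_{k+1} \;\leq\; A\!\left(A^k r_0 + \frac{B}{1-A}\right) + B \;=\; A^{k+1} r_0 + \frac{AB}{1-A} + B.
\end{equation*}
The only calculation left is to combine the two constant terms over the common denominator $1-A$, giving $\frac{AB + B(1-A)}{1-A} = \frac{B}{1-A}$, which closes the induction and yields exactly the claimed bound at index $k+1$.

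As a sanity check, I would also note the alternative derivation by unrolling: iterating the recursion $k$ times gives $r_k \leq A^k r_0 + B\sum_{j=0}^{k-1} A^j$, and since $0<A<1$ we have $\sum_{j=0}^{k-1} A^j = (1-A^k)/(1-A) \leq 1/(1-A)$, producing the same bound. There is no real obstacle here; the only point to be slightly careful about is that the hypothesis uses $0<A<1$ (not merely $A<1$), which is what guarantees both that $A^k$ is nonnegative and tends to $0$ as $k\to\infty$ and that $1/(1-A)$ is a finite positive constant, so that the bound $B/(1-A)$ is well-defined. Positivity of $r_0$ plays no role in the argument beyond ensuring the statement is non-vacuous.
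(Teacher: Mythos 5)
Your proof is correct. The paper states this lemma in its ``Basic Facts'' appendix without any proof at all, so there is nothing to compare against; your induction (base case from $B>0$ and $1-A>0$, inductive step using $AB/(1-A)+B = B/(1-A)$) is the standard argument and fills the gap cleanly, and your remark that $A>0$ is what lets you substitute the inductive bound into $Ar_k+B$ without flipping the inequality is the one point worth being explicit about.
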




\section{Proof of Theorem~\ref{thm:distributed-fpci-convergence}}
Since Theorem~\ref{thm:singlenode-nonvr} is a particular case of Theorem~\ref{thm:distributed-fpci-convergence}, we only prove Theorem~\ref{thm:distributed-fpci-convergence}.
From \eqref{eq:dfmci-update}, we have conditionally on $(x^k,s^k)$,
\begin{eqnarray}
    \ecn{x^{k+1} - x^\star} &=& \ecn{\frac{1}{n} \sum_{i=1}^{n} \cC \br{\cT_i (x^k, s_i^k); \xi_i^k} - x^\star} \nonumber \\
    &\overset{\eqref{eq:basic-fact-variance-decomp}}{=}& \ecn{\frac{1}{n} \sum_{i=1}^{n} \cC \br{\cT_i (x^k, s_i^k); \xi_i^k} - \frac{1}{n} \sum_{i=1}^{n} \cT_i (x^k, s_i^k)} + \sqn{ \frac{1}{n} \sum_{i=1}^{n} \cT_i (x^k, s_i^k) - x^\star } \nonumber \\
    &\overset{\eqref{eq:finite-sum}}{=}& \frac{1}{n^2} \ecn{\sum_{i=1}^{n} \br{\cC \br{\cT_i (x^k, s_i^k); \xi_i^k} - \cT_i (x^k, s_i^k)}} + \sqn{\cT(x^k, s^k) - x^\star} \nonumber \\
    \label{eq:dfpci-proof-1}
    &\overset{\eqref{eq:variance-of-independent-sum}}{=} &\frac{1}{n^2} \sum_{i=1}^{n} \ecn{\cC\br{\cT_i (x^k, s_i^k); \xi_i^k} - \cT_i (x^k, s_i^k)} + \sqn{\cT(x^k, s^k) - x^\star}.
\end{eqnarray}
The first term in \eqref{eq:dfpci-proof-1} can be bounded using Assumption~\ref{asm:compression-operator}:
\begin{eqnarray}
    \frac{1}{n^2} \sum_{i=1}^{n} \ecn{\cC\br{\cT_i (x^k, s_i^k); \xi_i^k} - \cT_i (x^k, s_i^k)} &\leq & \frac{\omega}{n^2} \sum_{i=1}^{n} \sqn{\cT_i (x^k, s_i^k)} \nonumber \\
    &\overset{\eqref{eq:sqnorm-triangle-inequality}}{\leq} & \frac{2 \omega}{n^2} \sum_{i=1}^{n} \sqn{\cT_i (x^k, s_i^k) - \cT_i (x^\star, s_i^k)} + \frac{2 \omega}{n^2} \sum_{i=1}^{n} \sqn{\cT_i (x^\star, s_i^k)} \nonumber \\
    &\overset{\eqref{eq:individual-Ti-Lipschitz}}{\leq} & \frac{2 \omega}{n^2} \sum_{i=1}^{n} c_i^2 \sqn{x^k - x^\star} + \frac{2 \omega}{n^2} \sum_{i=1}^{n} \sqn{\cT_i (x^\star, s_i^k)} \nonumber \\
    \label{eq:dfpci-proof-2}
    &= & \frac{2 \omega c^2}{n} \sqn{x^k - x^\star} + \frac{2 \omega}{n^2} \sum_{i=1}^{n} \sqn{\cT_i (x^\star, s_i^k)}.
\end{eqnarray}
Plugging in \eqref{eq:dfpci-proof-2} in \eqref{eq:dfpci-proof-1},
\begin{eqnarray*}
    \ecn{x^{k+1} - x^\star} &\leq & \frac{2 \omega c^2}{n} \sqn{x^k - x^\star} + \frac{2 \omega}{n^2} \sum_{i=1}^{n} \sqn{\cT_i (x^\star, s_i^k)} + \sqn{\cT(x^k, s^k) - x^\star}
\end{eqnarray*}
Therefore, conditionally on $x^k$,
\begin{eqnarray*}
    \ecn{x^{k+1} - x^\star} &\leq & \frac{2 \omega c^2}{n} \sqn{x^k - x^\star} + \frac{2 \omega}{n^2} \sum_{i=1}^{n} \ecn{\cT_i (x^\star, s_i^k)} + \ecn{\cT(x^k, s^k) - x^\star} \\
    &\overset{\eqref{eq:T-shrinkage-to-constant}}{\leq} & \br{\frac{2 \omega c^2}{n} + 1 - \rho} \sqn{x^k - x^\star} + B + \frac{2 \omega}{n^2} \sum_{i=1}^{n} \ecn{\cT_i (x^\star, s_i^k)}.
\end{eqnarray*}
Finally taking unconditional expectations yields the theorem's claim.

\section{Proof of Theorem~\ref{theorem:VR-distributed-fpci}}
Since Theorem~\ref{thm:singlenode-vr} is a particular case of Theorem~\ref{theorem:VR-distributed-fpci}, we only prove Theorem~\ref{theorem:VR-distributed-fpci}.

\begin{lemma}
    \label{lemma:VR-hk-recursion}
    Under Assumption~\ref{asm:compression-operator}, if $0 < \alpha \leq \frac{1}{\omega + 1}$, then for every $i = \{1, \ldots, n\}$ the iterates of Algorithm~\ref{alg:VR-distr-fixed-point-compressed-iterates} satisfy conditionally on $x^k$ and $h^k_i$:
    \begin{align}
        \label{eq:lma-VR-hk-recursion}
        \ecn{h^{k+1}_i - \cT_i (x^\star, s_i^k)} \leq \br{1 - \alpha} \ecn{h^k_i - \cT_i (x^\star, s_i^k)} + \alpha \ecn{\cT_i (x^k, s_i^k) - \cT_i (x^\star, s_i^k)}
    \end{align}
\end{lemma}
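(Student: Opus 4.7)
The plan is to establish the recursion by first conditioning on $s_i^k$ (in addition to $x^k$ and $h_i^k$), doing all the work at that level using only the compression operator's properties, and then taking expectation over $s_i^k$ at the end.

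Let me write $T_i^\star \eqdef \cT_i(x^\star, s_i^k)$ and $u \eqdef h_i^k - T_i^\star$, $w \eqdef \cT_i(x^k, s_i^k) - h_i^k$, so that $u + w = \cT_i(x^k, s_i^k) - T_i^\star$. Using the update rule $h_i^{k+1} = h_i^k + \alpha \delta_i^{k+1}$, I would expand
\[
\sqn{h_i^{k+1} - T_i^\star} = \sqn{u + \alpha \delta_i^{k+1}} = \sqn{u} + 2\alpha \ev{u, \delta_i^{k+1}} + \alpha^2 \sqn{\delta_i^{k+1}}.
\]
Now I would take the expectation over $\xi_i^k$ conditionally on $(x^k, h_i^k, s_i^k)$. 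Unbiasedness of $\cC$ gives $\ec[\xi_i^k]{\delta_i^{k+1}} = w$, and Assumption~\ref{asm:compression-operator} combined with the variance decomposition~\eqref{eq:basic-fact-variance-decomp} yields $\ec[\xi_i^k]{\sqn{\delta_i^{k+1}}} \leq (\omega+1)\sqn{w}$. Plugging in and using $2\ev{u,w} = \sqn{u+w} - \sqn{u} - \sqn{w}$, I obtain
\[
\ec[\xi_i^k]{\sqn{h_i^{k+1} - T_i^\star}} \leq (1-\alpha)\sqn{u} + \alpha \sqn{u+w} + \bigl(\alpha^2(\omega+1) - \alpha\bigr)\sqn{w}.
\]

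The key algebraic step is the assumption $\alpha \leq 1/(\omega+1)$, which makes the coefficient $\alpha^2(\omega+1) - \alpha$ non-positive, allowing me to drop that last term. This yields
\[
\ec[\xi_i^k]{\sqn{h_i^{k+1} - T_i^\star}} \leq (1-\alpha)\sqn{h_i^k - T_i^\star} + \alpha \sqn{\cT_i(x^k, s_i^k) - T_i^\star}.
\]
Finally, I would take the expectation over $s_i^k$ (conditioning only on $x^k$ and $h_i^k$) on both sides, using the tower property of conditional expectation, to arrive at~\eqref{eq:lma-VR-hk-recursion}.

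The main obstacle is the careful handling of the conditioning: the random variable $s_i^k$ appears on both sides of the inequality (in $h_i^{k+1}$ through $\delta_i^{k+1}$, and explicitly in $\cT_i(x^\star, s_i^k)$), so it is essential to first freeze $s_i^k$, exploit only the $\xi_i^k$-randomness to apply the compression bound, and only afterwards integrate over $s_i^k$. The other subtle point is the precise tightness argument that forces $\alpha \leq 1/(\omega+1)$: without this, the residual $\sqn{w}$ term cannot be absorbed and the clean $(1-\alpha)$-contraction on the first term breaks.
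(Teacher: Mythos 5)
Your proposal is correct and follows essentially the same route as the paper: expand the square, use unbiasedness of $\cC$ and the bound $\ec[\xi_i^k]{\sqn{\delta_i^{k+1}}} \leq (\omega+1)\sqn{\cT_i(x^k,s_i^k)-h_i^k}$, absorb the residual term via $\alpha \leq \frac{1}{\omega+1}$, apply the polarization identity, and only then integrate over $s_i^k$. The only cosmetic difference is the order of operations — you expand $2\ev{u,w}$ first and drop the non-positive $\sqn{w}$ coefficient afterwards, whereas the paper first replaces $\alpha^2(\omega+1)$ by $\alpha$ and then evaluates the combined inner product — but these are algebraically identical.
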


\begin{proof}

Conditionally on $x^k, h^k_1, \ldots, h^k_n, s_1^k, \ldots, s_n^k$ we have
\begin{eqnarray*}
    \ecn{h^{k+1}_i - \cT_i (x^\star, s_i^k)} &=& \ecn{h^{k}_i - \cT_i (x^\star, s_i^k) + \alpha \delta^{k}_i} \\
    &=& \sqn{h^k_i - \cT_i (x^\star, s_i^k)} + 2 \alpha \ev{ h^k_i - \cT_i (x^\star, s_i^k), \ec{\delta^{k}_i} } + \alpha^2 \ecn{\delta^k_i} \\
    & \leq & \sqn{h^k_i - \cT_i (x^\star, s_i^k)} + 2 \alpha \ev{h^k_i - \cT_i (x^\star, s_i^k), \cT_i (x^k, s_i^k) - h^k_i} \\
    && \qquad + \alpha^2 \br{\omega + 1} \sqn{\cT_i (x^k, s_i^k) - h^k_i} \\
    & \leq & \sqn{h^k_i - \cT_i (x^\star, s_i^k)} + 2 \alpha \ev{h^k_i - \cT_i (x^\star, s_i^k), \cT_i (x^k, s_i^k) - h^k_i} \\
    && \qquad + \alpha \sqn{\cT_i (x^k, s_i^k) - h^k_i} \\
    &=& \sqn{h^k_i - \cT_i (x^\star, s_i^k)} + \alpha \ev{2 h^k_i - 2 \cT_i (x^\star, s_i^k) + \cT_i (x^k, s_i^k) - h^k_i, \cT_i (x^k, s_i^k) - h^k_i}.
\end{eqnarray*}
For the inner product in the last inequality, we have
\begin{align*}
    &\ev{2 h^k_i - 2 \cT_i (x^\star, s_i^k) + \cT_i (x^k, s_i^k) - h^k_i, \cT_i (x^k, s_i^k) - h^k_i} \\
    &= \ev{h^k_i - \cT_i (x^\star, s_i^k) + \cT_i (x^k, s_i^k) - \cT_i (x^\star, s_i^k), \cT_i (x^k, s_i^k) - \cT_i (x^\star, s_i^k) - \br{h^k_i - \cT_i (x^\star, s_i^k)}} \\
    &= - \sqn{h^k_i - \cT_i (x^\star, s_i^k)} + \sqn{\cT_i (x^k, s_i^k) - \cT_i (x^\star, s_i^k)}.
\end{align*}
Using this in the previous inequality, we get
\begin{align*}
    \ecn{h^{k+1}_i - \cT_i (x^\star, s_i^k)} &= \br{1 - \alpha} \sqn{h^k_i - \cT_i (x^\star, s_i^k)} + \alpha \sqn{\cT_i (x^k, s_i^k) - \cT_i (x^\star, s_i^k)}.
\end{align*}
It remains to take expectation with respect to the randomness in $s_i^k$.

\end{proof}

\begin{lemma}
    \label{lemma:VR-distance-recursion}
    Under Assumptions~\ref{asm:T-contraction} and \ref{asm:compression-operator}, the iterates of Algorithm~\ref{alg:VR-distr-fixed-point-compressed-iterates} satisfy,
    \begin{eqnarray}
            \ecn{x^{k+1} - x^\star} & \leq & (1 - \eta \rho) \sqn{x^k - x^\star} + \eta B \nonumber \\
            && \qquad + \frac{2 \eta^2 \omega}{n^2} \sum_{i=1}^{n} \ec{ \sqn{\cT_i (x^k, s_i^k) - \cT_i (x^\star, s_i^k)} + \sqn{\cT_i (x^\star, s_i^k) - h^k_i} }.        \label{eq:lma-VR-distance-recursion}
    \end{eqnarray}
\end{lemma}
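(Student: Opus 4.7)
My plan is to work conditionally on the $\sigma$-field generated by iterations up to time $k$ (in particular, $x^k$, $h_1^k,\dots,h_n^k$ and $s_1^k,\dots,s_n^k$ are all treated as fixed) and take the expectation in stages: first over the compression randomness $\xi^k = (\xi_1^k,\dots,\xi_n^k)$, then over the sampling randomness $s^k$, and finally the total expectation. Rewriting the update as
\[
x^{k+1} - x^\star = (1-\eta)(x^k - x^\star) + \eta \br{\frac{1}{n}\sum_{i=1}^n \Delta_i^{k+1} - x^\star},
\]
and using that $\cC$ is unbiased, so $\ec[\xi_i^k]{\Delta_i^{k+1}} = \cT_i(x^k,s_i^k)$ and hence $\ec[\xi^k]{\frac{1}{n}\sum_i \Delta_i^{k+1}} = \cT(x^k,s^k)$, I will apply the bias/variance decomposition \eqref{eq:basic-fact-variance-decomp} to obtain
\[
\ec[\xi^k]{\sqn{x^{k+1}-x^\star}} = \sqn{(1-\eta)(x^k-x^\star) + \eta(\cT(x^k,s^k)-x^\star)} + \eta^2\,\ec[\xi^k]{\sqn{\tfrac{1}{n}\sum_{i=1}^n \Delta_i^{k+1} - \cT(x^k,s^k)}}.
\]
This decomposition is the crucial step — it is what produces the $\eta^2$ prefactor in the target inequality, which a naive application of Jensen would have washed down to $\eta$.

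Next I handle the two terms separately. For the bias term, I apply Jensen's inequality \eqref{eq:jensen-sqnorm} (valid since $\eta\in[0,1]$), yielding an upper bound of $(1-\eta)\sqn{x^k-x^\star} + \eta\sqn{\cT(x^k,s^k)-x^\star}$, and then take expectation over $s^k$ to invoke Assumption~\ref{asm:T-contraction}, which gives $\eta[(1-\rho)\sqn{x^k-x^\star}+B]$. Combining with the $(1-\eta)$ coefficient produces exactly the $(1-\eta\rho)\sqn{x^k-x^\star} + \eta B$ contribution. For the variance term, the independence of the $\xi_i^k$'s across $i$ lets me apply \eqref{eq:variance-of-independent-sum}, so that
\[
\ec[\xi^k]{\sqn{\tfrac{1}{n}\sum_{i=1}^n \br{\Delta_i^{k+1} - \cT_i(x^k,s_i^k)}}} = \frac{1}{n^2}\sum_{i=1}^n \ec[\xi_i^k]{\sqn{\delta_i^{k+1} - (\cT_i(x^k,s_i^k) - h_i^k)}}.
\]
Applying Assumption~\ref{asm:compression-operator} to each summand bounds this by $\tfrac{\omega}{n^2}\sum_i \sqn{\cT_i(x^k,s_i^k) - h_i^k}$.

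Finally, to match the form of the claimed inequality, I apply the elementary inequality \eqref{eq:sqnorm-triangle-inequality} with the ``pivot'' $\cT_i(x^\star,s_i^k)$ to split
\[
\sqn{\cT_i(x^k,s_i^k) - h_i^k} \leq 2\sqn{\cT_i(x^k,s_i^k) - \cT_i(x^\star,s_i^k)} + 2\sqn{\cT_i(x^\star,s_i^k) - h_i^k},
\]
which introduces the factor of $2$ and the two summands appearing in \eqref{eq:lma-VR-distance-recursion}. Combining the three bounds and taking total expectation gives the claim. I do not expect any substantial obstacle: the only subtle point is ordering the expectations correctly so that the $\eta^2$ survives, and keeping the independence of the $\xi_i^k$'s separate from the (possibly dependent) $s_i^k$'s — the latter independence is not used here, only the former.
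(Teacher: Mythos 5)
Your proposal is correct and follows essentially the same route as the paper's proof: the same variance decomposition \eqref{eq:basic-fact-variance-decomp} applied conditionally on $s^k$ to isolate the $\eta^2$-weighted compression variance, followed by \eqref{eq:variance-of-independent-sum}, Assumption~\ref{asm:compression-operator}, Jensen's inequality \eqref{eq:jensen-sqnorm} with Assumption~\ref{asm:T-contraction} for the bias term, and the split \eqref{eq:sqnorm-triangle-inequality} around the pivot $\cT_i(x^\star,s_i^k)$. Your side remark that only the independence of the $\xi_i^k$'s (not of the $s_i^k$'s) is needed here is also consistent with the paper, which does not invoke independence of $s_1,\ldots,s_n$ in this lemma.
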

\begin{proof}
    Conditionally on $x^k, h^k_1, \ldots, h^k_n, s_1^k, \ldots, s_n^k$ we have,
    \begin{eqnarray}
        \ecn{x^{k+1} - x^\star} &=& \ecn{ \br{1 - \eta} x^k + \frac{\eta}{n} \sum_{i=1}^{n} \br{\delta^{k}_i + h^k_i} - x^\star } \nonumber \\
        &\overset{\eqref{eq:basic-fact-variance-decomp}}{=}& \sqn{ \br{1 - \eta} x^k + \eta \cT (x^k, s^k) - x^\star } + \frac{\eta^2}{n^2} \ecn{ \sum_{i=1}^{n} \delta^k_i - \ec{\delta^k_i} } \nonumber \\
        &\overset{\eqref{eq:variance-of-independent-sum}}{=} & \sqn{ \br{1 - \eta} x^k + \eta \cT (x^k, s^k) - x^\star } + \frac{\eta^2}{n^2} \sum_{i=1}^{n} \ecn{ \delta^k_i - \ec{\delta^k_i} } \nonumber \\
        & \leq & \sqn{ \br{1 - \eta} x^k + \eta \cT (x^k, s^k) - x^\star } + \frac{\eta^2 \omega}{n^2} \sum_{i=1}^{n} \sqn{ \cT_i (x^k, s_i^k) - h^k_i } . \nonumber
    \end{eqnarray}
    We now take expectation with respect to the randomness in $s_1^k, \ldots, s_n^k$ and conditionally on $x^k, h^k_1, \ldots, h^k_n$:
    \begin{eqnarray}
        \label{eq:lma-VR-distance-recursion-1}
        \ecn{x^{k+1} - x^\star} &\leq & \ecn{ \br{1 - \eta} x^k + \eta \cT (x^k, s^k) - x^\star } + \frac{\eta^2 \omega}{n^2} \sum_{i=1}^{n} \ecn{ \cT_i (x^k, s_i^k) - h^k_i }.
    \end{eqnarray}
    To bound the first term in \eqref{eq:lma-VR-distance-recursion-1} we use the convexity of the squared norm as follows,
    \begin{eqnarray}
        \ecn{ \br{1 - \eta} x^k + \eta \cT(x^k, s^k) - x^\star } &= & \ecn{ \br{1 - \eta} \br{x^k - x^\star} + \eta \br{\cT (x^k, s^k) - x^\star} } \nonumber \\
        &\overset{\eqref{eq:jensen-sqnorm}}{\leq} & \br{1 - \eta} \sqn{x^k - x^\star} + \eta \ecn{\cT(x^k, s^k) - x^\star} \nonumber \\
        &\overset{\eqref{eq:T-shrinkage-to-constant}}{\leq} & \br{1 - \eta + \eta \br{1 - \rho}} \sqn{x^k - x^\star} + \eta B \nonumber \\
        \label{eq:lma-VR-distance-recursion-2}
        &=& \br{1 - \eta \rho} \sqn{x^k - x^\star} + \eta B.
    \end{eqnarray}
    For the second term in \eqref{eq:lma-VR-distance-recursion-1} we have,
    \begin{eqnarray}
        \label{eq:lma-VR-distance-recursion-3}
        \ecn{ \cT_i (x^k, s_i^k) - h^k_i } &\overset{\eqref{eq:sqnorm-triangle-inequality}}{\leq} & 2 \ecn{\cT_i (x^k, s_i^k) - \cT_i (x^\star, s_i^k)} + 2 \ecn{\cT_i (x^\star, s_i^k) - h^k_i}.
    \end{eqnarray}
    It remains to substitute with \eqref{eq:lma-VR-distance-recursion-2} and \eqref{eq:lma-VR-distance-recursion-3} in \eqref{eq:lma-VR-distance-recursion-1}:
    \begin{eqnarray*}
        \ecn{x^{k+1} - x^\star} & \leq & (1 - \eta \rho) \sqn{x^k - x^\star} + \eta B \\
        && \qquad + \frac{2 \eta^2 \omega}{n^2} \sum_{i=1}^{n} \ec{ \sqn{\cT_i (x^k, s_i^k) - \cT_i (x^\star, s_i^k)} + \sqn{\cT_i (x^\star, s_i^k) - h^k_i} }.
    \end{eqnarray*}
\end{proof}

We now prove Theorem~\ref{theorem:VR-distributed-fpci}.
By Lemmas~\ref{lemma:VR-distance-recursion} and \ref{lemma:VR-hk-recursion} taking conditional expectation w.r.t. $x^k, h^k_1, \ldots, h^k_n$,
\begin{eqnarray}
    \ec{\Psi^{k+1}} &=& \ecn{x^{k+1} - x^\star} + \frac{4 \eta^2 \omega}{\alpha n^2} \sum_{i=1}^{n} \ecn{h^{k+1}_{i} - x^\star} \nonumber \\
       & \overset{\eqref{eq:lma-VR-distance-recursion} + \eqref{eq:lma-VR-hk-recursion}}{\leq}& \br{1 - \eta \rho} \sqn{x^k - x^\star} + \frac{6 \eta^2 \omega}{n^2} \sum_{i=1}^{n} \ecn{\cT_i (x^k, s_i^k) - \cT_i (x^\star, s_i^k)}  \nonumber\\
        && \qquad + \frac{4 \eta^2 \omega}{\alpha n^2} \br{1 - \frac{\alpha}{2}} \sum_{i=1}^{n} \ecn{h^k_i - \cT_i (x^\star, s_i^k)} + \eta B
 \nonumber \\
&
        \overset{\eqref{eq:individual-Ti-Lipschitz}}{\leq}& \br{1 - \eta \rho} \sqn{x^k - x^\star} + \frac{6 \eta^2 \omega}{n^2} \sum_{i=1}^{n} c_i^2 \cdot \sqn{x^k - x^\star}  \nonumber\\
        && \qquad + \frac{4 \eta^2 \omega}{\alpha n^2} \br{1 - \frac{\alpha}{2}} \sum_{i=1}^{n} \ecn{h^k_i - \cT_i (x^\star, s_i^k)} + \eta B
 \nonumber \\
       & =& \br{1 - \eta \rho + \frac{6 \eta^2 \omega c^2}{n}} \sqn{x^k - x^\star} + \eta B \nonumber  \\
        && \qquad + \frac{4 \eta^2 \omega}{\alpha n^2} \br{1 - \frac{\alpha}{2}} \sum_{i=1}^{n} \ecn{h^k_i - \cT_i (x^\star, s_i^k)}.  \label{eq:thm-VR-main-proof-1}
\end{eqnarray}

To get the optimal stepsize $\eta \in (0, 1]$ we solve
\[ \min_{\eta \in (0, 1]} \pbr{ 1 - \eta \rho + \frac{6 \eta^2 \omega c^2}{n} }, \]
One can observe that the solution of this problem is the value of $\eta$ in Theorem~\ref{theorem:VR-distributed-fpci}. Using this choice of $\eta$ we get
\begin{equation}
    \label{eq:thm-VR-main-proof-2}
    1 - \eta \rho + \frac{6 \eta^2 \omega c^2}{n} = 1 - \frac{\eta \rho}{2} - \frac{\eta \rho}{2} \br{1 - \frac{12 \eta \omega c^2}{n \rho}} \overset{}{\leq} 1 - \frac{\eta \rho}{2}.
\end{equation}
Hence using \eqref{eq:thm-VR-main-proof-2} in \eqref{eq:thm-VR-main-proof-1},
\begin{eqnarray}
    \ec{\Psi^{k+1}} &\leq& \br{1 - \frac{\eta \rho}{2}} \sqn{x^k - x^\star} + \eta B + \frac{4 \eta^2 \omega}{\alpha n^2} \br{1 - \frac{\alpha}{2}} \sum_{i=1}^{n} \ecn{h^k_i - \cT_i (x^\star, s_i^k)} \nonumber \\
    &\leq & \max \pbr{ 1 - \frac{\eta \rho}{2}, 1 - \frac{\alpha}{2} } \ec{ \sqn{x^k - x^\star} + \frac{4 \eta^2 \omega}{\alpha n^2} \sum_{i=1}^{n} \sqn{h^k_i - \cT_i (x^\star, s_i^k)}  } + \eta B \nonumber \\
    \label{eq:thm-VR-main-proof-3}
    &=& \br{1 - \frac{\min \pbr{\alpha, \eta \rho}}{2}} \Psi^k + \eta B.
\end{eqnarray}
It remains to take unconditional expectations in \eqref{eq:thm-VR-main-proof-3}, yielding \eqref{eq:thm-VR-main-convergence}.

\end{document}